\newtheorem{theorem}{Theorem}[section]
\newtheorem{lemma}[theorem]{Lemma}
\newtheorem{definition}[theorem]{Definition}
\newtheorem{corollary}[theorem]{Corollary}
\numberwithin{equation}{section}
\begin{document}

\title{\Large Modularity Component Analysis versus Principal Component Analysis}
\author{Hansi Jiang\\
\\
North Carolina State University\\
Raleigh, NC, 27695, USA\\
\\
hjiang6@ncsu.edu\\
\and
Carl Meyer\\
\\
North Carolina State University\\
Raleigh, NC, 27695, USA\\
\\
meyer@ncsu.edu\\
}

\maketitle

\begin{abstract} \small\baselineskip=9pt 
In this paper the exact linear relation between the leading eigenvectors of the modularity matrix and the singular vectors of an uncentered data matrix is developed. Based on this analysis the concept of a modularity component is defined, and its properties are developed. It is shown that modularity component analysis can be used to cluster data similar to how traditional principal component analysis is used  except that modularity component analysis does not require data centering. 
\end{abstract}

{\bf Key words:} Data clustering, Graph partitioning, Modularity matrix, Principal component analysis

{\bf AMS subject classifications:} 05C50, 15A18, 62H30, 90C59

\section{Introduction}
The purpose of this paper is to present a development of modularity components that are analogous to principal value components \cite{jolliffe2002principal}. It will be shown that modularity components have characteristics that are similar to those of principal value components in the sense that modularity components provide for data analysis in much the same manner as do principal value components. In particular, just as in the case of principal value components, modularity components are shown to be mutually orthogonal, and raw data can be projected onto the directions of a number of modularity components to reveal patterns and clusters in the data. However, a drawback of principal component analysis (PCA) is that it generally requires centering or standardizing the data before determining principal components. On the other hand, utilizing modularity components does not require data to be centered to accurately extract important information. Among other things, this means that sparsity in the original data is preserved whereas centering data naturally destroys inherent sparsity.\\
\newline
Moreover, we will complete the comparison of modularity components with principal components by showing that the component that maximizes the modularity function of the \textit{uncentered} data as defined in \cite{newman2006modularity} can replace the principal component that maximizes the variance in the \textit{centered} data. Finally, just as each succeeding principal component has maximal variance with the constraint that it is orthogonal to all previous principal components, each succeeding modularity component has maximal modularity with the constraint that it is orthogonal to all prior modularity components. \\
\newline
Our modularity components are derived from the concept of modularity introduced by Newman and Girvan in \cite{newman2004finding}, and further explained by Newman in \cite{newman2006modularity}.  The modularity partitioning method starts with an adjacency matrix or similarity matrix and aims to partition a graph by maximizing the modularity. Assuming the graph containing $n$ nodes, the modularity is defined by
\begin{equation}\label{eq1}
Q(\mathbf{s})=\frac{1}{4m}\mathbf{s}^T\mathbf{Bs},
\end{equation}
where $m$ is the number of edges in the graph, $\mathbf{B}$ is the modularity matrix defined below, and $\bar{\mathbf{s}}\in\mathbb{R}^n$ is a vector that maximizes $Q$. Since the number of edges in a given graph is constant, the multiplier $1/(4m)$ is often dropped for simplicity, and the modularity becomes
\begin{equation}\label{eq2}
Q(\mathbf{s})=\mathbf{s}^T\mathbf{Bs}.
\end{equation}
The modularity matrix is defined by
\begin{equation}\mathbf{B}=\mathbf{A}-\frac{\mathbf{d}\mathbf{d}^T}{2m},
\end{equation}
where $\mathbf{A}$ is an adjacency matrix or similarity matrix, and $\mathbf{d}=\begin{pmatrix}d_1&d_2&\cdots&d_n\end{pmatrix}^T$ is the vector containing the degrees of the nodes. It is proven in \cite{newman2006modularity} that the eigenvector corresponding to the largest eigenvalue of $\mathbf{B}$ can maximize $Q$. Like the spectral clustering method \cite{von2007tutorial}, the modularity clustering method also uses signs of entries in the dominant eigenvector to partition graphs. \\
\newline
The modularity partitioning algorithm has been widely applied and discussed. For instance, it has been applied to reveal human brain functional networks \cite{meunier2010hierarchical} and ecological networks \cite{fortuna2010nestedness}, and used in image processing \cite{mercovich2011automatic}. Blondel et al. \cite{blondel2008fast} proposed a heuristic that can reveal the community structure for large networks. Rotta and Noack \cite{rotta2011multilevel} compared several heuristics in maximizing modularity. DasGupta and Desai \cite{dasgupta2013complexity} studied the complexity of modularity clustering. The limitations of the modularity maximization technique are discussed in \cite{good2010performance} and \cite{lancichinetti2011limits}. \\
\newline
By the modularity algorithm \cite{newman2006modularity}, a graph is partitioned into two parts, and a hierarchy can be built by iteratively calculating the $\mathbf{B}$ matrices and their dominant eigenvectors. Repetitively partitioning a graph into two subsets may be inefficient and does not utilize information in subdominant eigenvectors. And while there is a connection between graph partitioning and data analysis, they are not strictly equivalent because extracting information from raw data by means of graph partitioning necessarily requires the knowledge or creation of a similarity or adjacency matrix, which in turn can only group nodes. For the purpose of data analysis, it is more desirable to analyze raw data without involving a similarity matrix. Modularity analysis can be executed directly from uncentered raw data $\mathbf{X}_{p\times n}$ ($p$ number of attributes, $n$ number of data points) by redefining the modularity matrix to be 
\begin{equation}\mathbf{B}=\mathbf{X}^T\mathbf{X}-\frac{\mathbf{d}\mathbf{d}^T}{2m},
\end{equation}
but in practice $\mathbf{X}^T\mathbf{X}$ need not be explicitly computed. In addition to using only raw data, this formulation allows the creation of modularity components that are directly analogous to principal value components created from centered data. In what follows, let $\mathbf{A}=\mathbf{X}^T\mathbf{X}$, where the rows of $\mathbf{X}$ may be normalized when different units are involved.\\
\newline
The paper is organized as follows. In Section 2 we give the definition of modularity components. In Section 3 properties of the modularity components are established. Section 4 contains some conclusions.  

\section{Definition of Modularity Components}
In this section we will give the definition of the modularity components. Before doing that we will prove a couple of lemmas about the relation between the eigenvectors of a particular kind of similarity matrices that can be fed in the modularity algorithm and the singular vectors of the data matrix. The lemmas will help us to define the modularity components. Suppose the SVD of the uncentered data matrix $\mathbf{X}$ is $\mathbf{X}=\mathbf{U\Sigma V}^T$ and that there are $k$ nonzero singular values. Then 
\begin{equation}\mathbf{A}=\mathbf{X}^T\mathbf{X}=\mathbf{V\Sigma}^T\mathbf{\Sigma V}^T
\end{equation}
has $k$ positive eigenvalues. From the interlacing theorem mentioned in \cite{bunch1978rank} and \cite{wilkinson1965algebraic}, it is guaranteed that the largest $k-1$ eigenvalues of $\mathbf{B}=\mathbf{A}-\mathbf{d}\mathbf{d}^T/(2m)$ are positive. If the $k$ eigenvalues of $\mathbf{A}$ are simple, then the eigenvectors of $\mathbf{B}$ corresponding to the largest $k-1$ eigenvalues can be written as linear combinations of the eigenvectors of $\mathbf{A}$. The proof of the following lemma can be found in Appendix \ref{app1}.
\begin{lemma}\label{thm3}
Suppose the largest $k-1$ eigenvalues of $\mathbf{B}$ are $\beta_1>\beta_2>\cdots>\beta_{k-1}$ and the nonzero eigenvalues of $\mathbf{A}=\mathbf{X}^T\mathbf{X}$ are $\alpha_1>\alpha_2>\cdots>\alpha_{k}$. Further suppose that for $1\le i\le k-1$ we have $\beta_i\ne\alpha_i$ and $\beta_i\ne\alpha_{i+1}$. Then the eigenvector $\mathbf{b}_i$ of $\mathbf{B}$ can be written by
\begin{equation}\mathbf{b}_i=\sum_{j=1}^k\gamma_{ij}\mathbf{v}_j,
\end{equation}
where
\begin{equation}\gamma_{ij}=\frac{\mathbf{v}_j^T\mathbf{d}}{(\alpha_j-\beta_i)\|\mathbf{d}\|_2}.
\end{equation}
\end{lemma}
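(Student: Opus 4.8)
The plan is to treat $\mathbf{B} = \mathbf{A} - \frac{1}{2m}\mathbf{d}\mathbf{d}^T$ as a rank-one modification of the symmetric matrix $\mathbf{A}$ and to exploit the resolvent $(\mathbf{A} - \beta_i\mathbf{I})^{-1}$. First I would write the eigen-equation $\mathbf{B}\mathbf{b}_i = \beta_i\mathbf{b}_i$ explicitly as
\begin{equation}
\left(\mathbf{A} - \beta_i\mathbf{I}\right)\mathbf{b}_i = \frac{\mathbf{d}^T\mathbf{b}_i}{2m}\,\mathbf{d},
\end{equation}
so that the right-hand side is a scalar multiple $c\,\mathbf{d}$ of the degree vector, where $c = \mathbf{d}^T\mathbf{b}_i/(2m)$. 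The whole argument then reduces to solving this linear system for $\mathbf{b}_i$ and reading off its coordinates in the eigenbasis of $\mathbf{A}$.

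Next I would argue that $\beta_i$ is not an eigenvalue of $\mathbf{A}$, which makes $\mathbf{A} - \beta_i\mathbf{I}$ invertible. Here the interlacing theorem does the work: since $\frac{1}{2m}\mathbf{d}\mathbf{d}^T$ is a positive semidefinite rank-one term, the eigenvalues of $\mathbf{B}$ interlace those of $\mathbf{A}$, giving $\alpha_{i+1} \le \beta_i \le \alpha_i$. Thus the only spectral values of $\mathbf{A}$ that $\beta_i$ could coincide with are $\alpha_i$ and $\alpha_{i+1}$, both excluded by hypothesis; together with $\beta_i > 0$ this also rules out the zero eigenvalue. The same fact shows $c \neq 0$: if $c = 0$ then $\mathbf{A}\mathbf{b}_i = \beta_i\mathbf{b}_i$, forcing $\beta_i$ into the spectrum of $\mathbf{A}$, a contradiction. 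Inverting then yields $\mathbf{b}_i = c\,(\mathbf{A} - \beta_i\mathbf{I})^{-1}\mathbf{d}$.

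To extract the stated formula I would diagonalize $\mathbf{A} = \sum_{j=1}^n \alpha_j \mathbf{v}_j\mathbf{v}_j^T$ in its orthonormal eigenbasis (with $\alpha_j = 0$ for $j > k$) and apply the spectral form of the resolvent, so that
\begin{equation}
\mathbf{b}_i = c\sum_{j=1}^n \frac{\mathbf{v}_j^T\mathbf{d}}{\alpha_j - \beta_i}\,\mathbf{v}_j.
\end{equation}
The key observation, and the step I expect to be the crux, is that the sum collapses to $j = 1,\dots,k$: because the degrees are the row sums of $\mathbf{A}$, we have $\mathbf{d} = \mathbf{A}\mathbf{1} \in \operatorname{range}(\mathbf{A})$, so $\mathbf{d}$ is orthogonal to the null space of $\mathbf{A}$ and $\mathbf{v}_j^T\mathbf{d} = 0$ for every $j > k$. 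Finally, since an eigenvector is determined only up to scale, I would fix the normalization by choosing $c = 1/\|\mathbf{d}\|_2$, which reproduces $\gamma_{ij} = \mathbf{v}_j^T\mathbf{d}/\big((\alpha_j - \beta_i)\|\mathbf{d}\|_2\big)$ exactly.

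The main obstacle is the range argument above; without $\mathbf{d} \in \operatorname{range}(\mathbf{A})$ the expansion would retain $n-k$ extra null-space terms and the clean formula over $j \le k$ would fail. Everything else is bookkeeping: the invertibility of $\mathbf{A} - \beta_i\mathbf{I}$ and the nonvanishing of $c$ both follow immediately once interlacing pins $\beta_i$ strictly off the spectrum of $\mathbf{A}$.
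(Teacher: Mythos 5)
Your proposal is correct and takes essentially the same route as the paper's proof: interlacing pins $\beta_i$ strictly inside $(\alpha_{i+1},\alpha_i)$ and hence off the spectrum of $\mathbf{A}$, the eigenvector is obtained as a multiple of $(\mathbf{A}-\beta_i\mathbf{I})^{-1}\mathbf{d}$, and $\mathbf{d}=\mathbf{Ae}\in\operatorname{range}(\mathbf{A})$ kills the null-space terms so the expansion truncates to $j\le k$. The only difference is cosmetic: the paper first conjugates $\mathbf{B}$ by $\mathbf{V}$ into a diagonal-plus-rank-one matrix and cites the known eigenvector formula for such matrices, whereas you re-derive that formula inline from the eigen-equation in the original basis.
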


The point of this lemma is to realize that the vector $\mathbf{b}_i$ is a linear combination of the $\mathbf{v}_i$. The next lemma gives the linear expression of the vectors $\mathbf{b}_i^T\mathbf{X}^{\dagger}$ in terms of the $\mathbf{u}_i$, where $\mathbf{X}^{\dagger}$ is the Moore-Penrose inverse of $\mathbf{X}$. There are practical cases where our assumptions in Lemma \ref{thm3} hold true, and examples are given in Appendix \ref{app2}.

\begin{lemma}\label{thm4}
With the assumptions in Lemma \ref{thm3}, we have 
\begin{equation}\mathbf{b}_i^T\mathbf{X}^{\dagger}=\sum_{j=1}^k\frac{\gamma_{ij}}{\sigma_j}\mathbf{u}_j^T,
\end{equation}
where $\sigma_j$ is the $j$-th the nonzero singular value of $\mathbf{X}$.
\end{lemma}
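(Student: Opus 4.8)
The plan is to substitute the expansion of $\mathbf{b}_i$ supplied by Lemma \ref{thm3} and then exploit the SVD structure of the Moore--Penrose inverse. First I would recall that when $\mathbf{X}=\mathbf{U\Sigma V}^T$, the pseudoinverse admits the standard closed form $\mathbf{X}^{\dagger}=\mathbf{V}\mathbf{\Sigma}^{\dagger}\mathbf{U}^T$, where $\mathbf{\Sigma}^{\dagger}$ is obtained by transposing $\mathbf{\Sigma}$ and replacing each of the $k$ nonzero singular values $\sigma_j$ by its reciprocal $1/\sigma_j$ in place. This identity needs no separate justification, so the entire argument reduces to a single matrix computation.

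Taking the transpose of the conclusion of Lemma \ref{thm3} gives $\mathbf{b}_i^T=\sum_{j=1}^k\gamma_{ij}\mathbf{v}_j^T$. I would then form the product
\begin{equation}
\mathbf{b}_i^T\mathbf{X}^{\dagger}=\left(\sum_{j=1}^k\gamma_{ij}\mathbf{v}_j^T\right)\mathbf{V}\mathbf{\Sigma}^{\dagger}\mathbf{U}^T
\end{equation}
and push the sum through. The key step is to evaluate $\mathbf{v}_j^T\mathbf{V}$: since the columns of $\mathbf{V}$ are orthonormal, this collapses to the standard basis row vector $\mathbf{e}_j^T$. Multiplying $\mathbf{e}_j^T$ by $\mathbf{\Sigma}^{\dagger}$ selects the reciprocal singular value, producing $(1/\sigma_j)\mathbf{e}_j^T$, which is precisely where the hypothesis $j\le k$ (so that $\sigma_j\ne 0$) is used. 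Finally $(1/\sigma_j)\mathbf{e}_j^T\mathbf{U}^T=(1/\sigma_j)\mathbf{u}_j^T$, because $\mathbf{e}_j^T$ extracts the $j$-th row of $\mathbf{U}^T$.

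Collecting the surviving terms yields $\mathbf{b}_i^T\mathbf{X}^{\dagger}=\sum_{j=1}^k(\gamma_{ij}/\sigma_j)\mathbf{u}_j^T$, which is exactly the claimed identity. Because every step is a direct manipulation of orthonormal factors, I do not expect a genuine obstacle here; the only point requiring care is the bookkeeping for the possibly rectangular $\mathbf{\Sigma}$ and the correct orientation of $\mathbf{\Sigma}^{\dagger}$, so that the reciprocal factors $1/\sigma_j$ attach to the correct indices and the truncation to the $k$ nonzero singular directions is respected throughout.
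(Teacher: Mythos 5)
Your proposal is correct and follows essentially the same route as the paper: both substitute the expansion $\mathbf{b}_i^T=\sum_{j=1}^k\gamma_{ij}\mathbf{v}_j^T$ from Lemma \ref{thm3}, use the SVD form $\mathbf{X}^{\dagger}=\mathbf{V}\mathbf{\Sigma}^{\dagger}\mathbf{U}^T$, and let orthonormality of the columns of $\mathbf{V}$ collapse the product to coordinate row vectors that $\mathbf{\Sigma}^{\dagger}$ scales by $1/\sigma_j$. The only cosmetic difference is that the paper carries the computation as a single $1\times n$ (then $1\times p$) row vector of coefficients while you track each term via $\mathbf{e}_j^T$; the content is identical.
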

\begin{proof}
$$\mathbf{b}_i^T\mathbf{X}^{\dagger}=\bigg(\sum_{j=1}^k\gamma_{ij}\mathbf{v}_j^T\bigg)\mathbf{V\Sigma^{\dagger}}\mathbf{U}^T$$
$$=\begin{pmatrix}
\gamma_{i1} & \gamma_{i2} & \cdots &\gamma_{ik} & 0 & \cdots & 0
\end{pmatrix}_{1\times n}\mathbf{\Sigma^{\dagger} U}^T$$
$$=\begin{pmatrix}
\frac{\gamma_{i1}}{\sigma_1} & \frac{\gamma_{i2}}{\sigma_2} & \cdots & \frac{\gamma_{ik}}{\sigma_k} & 0 & \cdots & 0
\end{pmatrix}_{1\times p}\mathbf{U}^T$$
$$=\sum_{j=1}^k\frac{\gamma_{ij}}{\sigma_j}\mathbf{u}_j^T.$$
\end{proof}

Lemma \ref{thm4} shows that if $\mathbf{b}_i$ can be written as a linear combination of the $\mathbf{v}_j$, then the vectors $\mathbf{b}_i^T\mathbf{X}^{\dagger}$ can be written as a linear combination of the $\mathbf{u}_i$. Next we give the formal definition of the modularity components.\\

\begin{definition}
Suppose $\mathbf{X}_{p\times n}$ is the data matrix, $\mathbf{b}_i$ is the eigenvector corresponding to the $i$-th largest eigenvalue of $\mathbf{B}$, where 
\begin{equation}\label{eq3}
\mathbf{B}=\mathbf{X}^T\mathbf{X}-\frac{\mathbf{d}\mathbf{d}^T}{2m}. 
\end{equation}
Under the assumptions in Lemma \ref{thm3}, let  
\begin{equation}\mathbf{m}_i^T=\mathbf{b}_i^T\mathbf{X}^\dagger=\sum_{j=1}^k\frac{\gamma_{ij}}{\sigma_j}\mathbf{u}_j^T.
\end{equation}
The $i$-th modularity component is defined to be
\begin{equation}\mathbf{c}_i=\frac{\mathbf{m}_i}{\|\mathbf{m}_i\|_2}.
\end{equation}
\end{definition}

By the two lemmas, it can be seen that as long as the assumptions in Lemma \ref{thm3} are met, the modularity components are well-defined, and the definition of $\mathbf{c}_i$ is based on the linear combination of $\mathbf{b}_i^T\mathbf{X}^\dagger$ in terms of the $\mathbf{u}_i$. In the next section some important properties of the modularity components are established.

\section{Properties of the Modularity Components}
In this section some properties of modularity components will be discussed. It will be seen that the properties of modularity components are similar to the ones of principal components. First we will prove that the modularity components, as long as they are well-defined, are perpendicular to each other. Then we will prove that if we project the uncentered data onto the span of the modularity components, then the projection will be a scalar multiple of the modularity vectors. Finally, we will prove that the `importance' of each modularity component is given by its corresponding eigenvalue of $\mathbf{B}$. The first modularity component has the largest modularity, and the $i$-th modularity component has the largest modularity with the constraint that it is perpendicular to the preceding $i-1$ modularity components.
\begin{theorem}\label{thm5}
With the assumptions in Lemma \ref{thm3}, suppose $\mathbf{X}_{p\times n}$ is the unnormalized data matrix, $\mathbf{A}=\mathbf{X}^T\mathbf{X}$, $\mathbf{B}=\mathbf{A}-\mathbf{d}\mathbf{d}^T/(2m)$. Suppose $\mathbf{b}_i$, $\mathbf{b}_j$ are the eigenvectors of $\mathbf{B}$ corresponding to eigenvalues $\lambda_i$ and $\lambda_j$, $1\le i,j\le k-1$, respectively. Then we have 
\begin{equation}\mathbf{B}=(\mathbf{BX}^\dagger)(\mathbf{BX}^\dagger)^T
\end{equation} and $\mathbf{c}_i\perp\mathbf{c}_j$ for $i\neq j$.
\end{theorem}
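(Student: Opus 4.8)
The plan is to first reduce the matrix identity to a single clean statement about $\mathbf{B}$ and $\mathbf{A}^\dagger$, and then to deduce the orthogonality of the modularity components from it. For the first part I would substitute the SVD $\mathbf{X}=\mathbf{U\Sigma V}^T$ and the Moore--Penrose inverse $\mathbf{X}^\dagger=\mathbf{V\Sigma}^\dagger\mathbf{U}^T$ into $(\mathbf{BX}^\dagger)(\mathbf{BX}^\dagger)^T$. Using that $\mathbf{B}$ is symmetric and that $\mathbf{X}^\dagger(\mathbf{X}^\dagger)^T=\mathbf{V\Sigma}^\dagger(\mathbf{\Sigma}^\dagger)^T\mathbf{V}^T=\mathbf{A}^\dagger$ (both equal the $n\times n$ matrix $\sum_{j=1}^k \alpha_j^{-1}\mathbf{v}_j\mathbf{v}_j^T$, since $\alpha_j=\sigma_j^2$), this collapses to
\begin{equation}
(\mathbf{BX}^\dagger)(\mathbf{BX}^\dagger)^T=\mathbf{B}\,\mathbf{X}^\dagger(\mathbf{X}^\dagger)^T\mathbf{B}=\mathbf{B}\mathbf{A}^\dagger\mathbf{B},
\end{equation}
so the whole first assertion is equivalent to proving $\mathbf{B}\mathbf{A}^\dagger\mathbf{B}=\mathbf{B}$.

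To establish $\mathbf{B}\mathbf{A}^\dagger\mathbf{B}=\mathbf{B}$ I would expand $\mathbf{B}=\mathbf{A}-\mathbf{d}\mathbf{d}^T/(2m)$ in the right-hand factor and treat the two pieces separately. The first piece uses $\mathbf{A}^\dagger\mathbf{A}=\mathbf{P}$, the orthogonal projector onto $\mathrm{range}(\mathbf{A})=\mathrm{span}\{\mathbf{v}_1,\dots,\mathbf{v}_k\}$. Because $\mathbf{d}=\mathbf{A}\mathbf{1}$ lies in $\mathrm{range}(\mathbf{A})$, both $\mathbf{A}$ and $\mathbf{d}\mathbf{d}^T$ map into $\mathrm{range}(\mathbf{A})$; hence $\mathrm{range}(\mathbf{B})\subseteq\mathrm{range}(\mathbf{A})$, and by symmetry of $\mathbf{B}$ this gives $\mathbf{B}\mathbf{P}=\mathbf{B}$, so $\mathbf{B}\mathbf{A}^\dagger\mathbf{A}=\mathbf{B}$. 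The second piece is $\mathbf{B}\mathbf{A}^\dagger\mathbf{d}$; here I would invoke the two defining structural facts of the modularity matrix, namely $\mathbf{d}=\mathbf{A}\mathbf{1}$ and $\mathbf{B}\mathbf{1}=\mathbf{A}\mathbf{1}-\mathbf{d}(\mathbf{d}^T\mathbf{1})/(2m)=\mathbf{d}-\mathbf{d}=\mathbf{0}$, to write $\mathbf{B}\mathbf{A}^\dagger\mathbf{d}=\mathbf{B}\mathbf{A}^\dagger\mathbf{A}\mathbf{1}=\mathbf{B}\mathbf{P}\mathbf{1}=\mathbf{B}\mathbf{1}=\mathbf{0}$. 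Combining the two pieces yields $\mathbf{B}\mathbf{A}^\dagger\mathbf{B}=\mathbf{B}-\mathbf{0}=\mathbf{B}$.

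For the orthogonality of the components, recall from the definition that $\mathbf{m}_i=(\mathbf{X}^\dagger)^T\mathbf{b}_i$, so that $\mathbf{m}_i^T\mathbf{m}_j=\mathbf{b}_i^T\mathbf{X}^\dagger(\mathbf{X}^\dagger)^T\mathbf{b}_j=\mathbf{b}_i^T\mathbf{A}^\dagger\mathbf{b}_j$, and $\mathbf{c}_i\perp\mathbf{c}_j$ is equivalent to $\mathbf{b}_i^T\mathbf{A}^\dagger\mathbf{b}_j=0$. Since $\lambda_i,\lambda_j>0$ for $1\le i,j\le k-1$, I would substitute $\mathbf{b}_i=\lambda_i^{-1}\mathbf{B}\mathbf{b}_i$ and $\mathbf{b}_j=\lambda_j^{-1}\mathbf{B}\mathbf{b}_j$ and apply the identity just proven:
\begin{equation}
\mathbf{b}_i^T\mathbf{A}^\dagger\mathbf{b}_j=\frac{1}{\lambda_i\lambda_j}\mathbf{b}_i^T\mathbf{B}\mathbf{A}^\dagger\mathbf{B}\mathbf{b}_j=\frac{1}{\lambda_i\lambda_j}\mathbf{b}_i^T\mathbf{B}\mathbf{b}_j=\frac{\lambda_j}{\lambda_i\lambda_j}\mathbf{b}_i^T\mathbf{b}_j.
\end{equation}
Because $\mathbf{B}$ is symmetric and $\lambda_i\ne\lambda_j$ for $i\ne j$, the eigenvectors are orthogonal, $\mathbf{b}_i^T\mathbf{b}_j=0$, so the expression vanishes and $\mathbf{c}_i\perp\mathbf{c}_j$. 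The same computation with $i=j$ gives $\|\mathbf{m}_i\|_2^2=\lambda_i^{-1}\|\mathbf{b}_i\|_2^2>0$, confirming the $\mathbf{c}_i$ are well defined.

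The routine parts are the SVD bookkeeping and the pseudoinverse identities. The one genuine obstacle is recognizing that $\mathbf{B}\mathbf{A}^\dagger\mathbf{B}=\mathbf{B}$ hinges entirely on the special structure of the modularity matrix built from $\mathbf{A}=\mathbf{X}^T\mathbf{X}$: the single relation $\mathbf{d}=\mathbf{A}\mathbf{1}$ simultaneously places $\mathbf{d}$ in $\mathrm{range}(\mathbf{A})$ and forces $\mathbf{1}\in\ker(\mathbf{B})$, and it is precisely the resulting vanishing $\mathbf{B}\mathbf{A}^\dagger\mathbf{d}=\mathbf{0}$ that makes the identity hold. Once that identity is in hand, the orthogonality claim follows essentially immediately.
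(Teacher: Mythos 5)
Your proposal is correct, and the second half (the orthogonality argument) coincides with the paper's: both substitute $\mathbf{b}_i=\lambda_i^{-1}\mathbf{B}\mathbf{b}_i$ into $\mathbf{m}_i^T\mathbf{m}_j$, apply the matrix identity, and finish with orthogonality of eigenvectors of the symmetric matrix $\mathbf{B}$ for distinct eigenvalues. Where you genuinely diverge is in how the identity $\mathbf{B}=(\mathbf{BX}^\dagger)(\mathbf{BX}^\dagger)^T$ is established. The paper proceeds by brute-force expansion: it substitutes $\mathbf{d}=\mathbf{X}^T\mathbf{Xe}$ and $2m=\mathbf{e}^T\mathbf{X}^T\mathbf{Xe}$ to write $\mathbf{B}$ entirely in terms of $\mathbf{X}$, uses $\mathbf{X}^T\mathbf{X}\mathbf{X}^\dagger=\mathbf{X}^T$ to get the explicit formula $\mathbf{BX}^\dagger=\mathbf{X}^T-\mathbf{X}^T\mathbf{Xe}\mathbf{e}^T\mathbf{X}^T/(\mathbf{e}^T\mathbf{X}^T\mathbf{Xe})$, and then multiplies out the four terms of the product, watching the cross terms collapse. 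You instead first reduce the claim to the single statement $\mathbf{B}\mathbf{A}^\dagger\mathbf{B}=\mathbf{B}$ via $\mathbf{X}^\dagger(\mathbf{X}^\dagger)^T=\mathbf{A}^\dagger$, and then prove it structurally: $\mathrm{range}(\mathbf{B})\subseteq\mathrm{range}(\mathbf{A})$ (because $\mathbf{d}=\mathbf{Ae}$) gives $\mathbf{B}\mathbf{A}^\dagger\mathbf{A}=\mathbf{B}$, while $\mathbf{Be}=\mathbf{0}$ kills the rank-one piece through $\mathbf{B}\mathbf{A}^\dagger\mathbf{d}=\mathbf{Be}=\mathbf{0}$. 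The two arguments use the same underlying facts, but yours isolates \emph{why} the identity holds --- $\mathbf{A}^\dagger$ is an inner inverse of $\mathbf{B}$ precisely because the single relation $\mathbf{d}=\mathbf{Ae}$ puts $\mathbf{d}$ in $\mathrm{range}(\mathbf{A})$ and $\mathbf{e}$ in $\ker(\mathbf{B})$ --- whereas the paper's expansion verifies it without exposing that mechanism. Your closing observation that $\|\mathbf{m}_i\|_2^2=\lambda_i^{-1}$ (for unit $\mathbf{b}_i$) is a bonus: it anticipates the identity $\beta_i=1/\|\mathbf{m}_i\|_2^2$ that the paper only derives later, in Theorem 3.3, by a separate maximization argument. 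One small point of care: your orthogonality step needs $\lambda_i\neq 0$ for $1\le i\le k-1$, which you justify by positivity; that is indeed covered by the paper's interlacing discussion preceding Lemma 2.1, so no gap.
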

\begin{proof}
It is sufficient to prove that $\mathbf{m}_i\perp\mathbf{m}_j$ for $i\neq j$. From $\mathbf{A}=\mathbf{X}^T\mathbf{X}$ we have 
$$\mathbf{d}=\mathbf{Ae}=\mathbf{X}^T\mathbf{Xe},$$
$$2m=\mathbf{d}^T\mathbf{e}=\mathbf{e}^T\mathbf{X}^T\mathbf{Xe},$$
where $\mathbf{e}$ is a column vector with all ones. Therefore,
$$\mathbf{B}=\mathbf{A}-\frac{\mathbf{d}\mathbf{d}^T}{2m}
=\mathbf{X}^T\mathbf{X}-\frac{(\mathbf{X}^T\mathbf{Xe})(\mathbf{X}^T\mathbf{Xe})^T}{\mathbf{e}^T\mathbf{X}^T\mathbf{Xe}}$$
$$=\mathbf{X}^T\mathbf{X}-\frac{\mathbf{X}^T\mathbf{Xe}\mathbf{e}^T\mathbf{X}^T\mathbf{X}}{\mathbf{e}^T\mathbf{X}^T\mathbf{Xe}}.$$
Since $\mathbf{X}^T\mathbf{X}\mathbf{X}^{\dagger}=\mathbf{X}^T$ is always true, we have 
$$\mathbf{BX}^\dagger=\mathbf{X}^T-\frac{\mathbf{X}^T\mathbf{Xe}\mathbf{e}^T\mathbf{X}^T}{\mathbf{e}^T\mathbf{X}^T\mathbf{Xe}}.$$
Consequently,
$$(\mathbf{BX}^\dagger)(\mathbf{BX}^\dagger)^T$$
$$=\bigg(\mathbf{X}^T-\frac{\mathbf{X}^T\mathbf{Xe}\mathbf{e}^T\mathbf{X}^T}{\mathbf{e}^T\mathbf{X}^T\mathbf{Xe}}\bigg)\bigg(\mathbf{X}-\frac{\mathbf{X}\mathbf{ee}^T\mathbf{X}^T\mathbf{X}}{\mathbf{e}^T\mathbf{X}^T\mathbf{Xe}}\bigg)$$
$$=\mathbf{X}^T\mathbf{X}-\frac{2\mathbf{X}^T\mathbf{Xe}\mathbf{e}^T\mathbf{X}^T\mathbf{X}}{\mathbf{e}^T\mathbf{X}^T\mathbf{Xe}}$$
$$+\frac{(\mathbf{e}^T\mathbf{X}^T\mathbf{Xe})\mathbf{X}^T\mathbf{Xe}\mathbf{e}^T\mathbf{X}^T\mathbf{X}}{(\mathbf{e}^T\mathbf{X}^T\mathbf{Xe})^2}$$
$$=\mathbf{X}^T\mathbf{X}-\frac{\mathbf{X}^T\mathbf{Xe}\mathbf{e}^T\mathbf{X}^T\mathbf{X}}{\mathbf{e}^T\mathbf{X}^T\mathbf{Xe}}.$$
Therefore $\mathbf{B}=(\mathbf{BX}^\dagger)(\mathbf{BX}^\dagger)^T$. Since $\mathbf{B}\mathbf{b}_i=\lambda_i\mathbf{b}_i$, $\mathbf{B}\mathbf{b}_j=\lambda_j\mathbf{b}_j$, $\lambda_i\neq0$, $\lambda_j\neq0$, we have 
$$\mathbf{m}_i^T\mathbf{m}_j=(\mathbf{b}_i^T\mathbf{X}^\dagger)(\mathbf{b}_j^T\mathbf{X}^\dagger)^T$$
$$=\bigg(\frac{1}{\lambda_i}\mathbf{b}_i^T\mathbf{BX}^\dagger\bigg)\bigg(\frac{1}{\lambda_j}\mathbf{b}_j^T\mathbf{BX}^\dagger\bigg)^T$$
$$=\frac{1}{\lambda_i\lambda_j}\mathbf{b}_i^T(\mathbf{BX}^\dagger)(\mathbf{BX}^\dagger)^T\mathbf{b}_j=\frac{1}{\lambda_i\lambda_j}\mathbf{b}_i^T\mathbf{B}\mathbf{b}_j$$
$$=\frac{1}{\lambda_i}\mathbf{b}_i^T\mathbf{b}_j=0,$$
so 
$$\mathbf{c}_i^T\mathbf{c}_j=\frac{\mathbf{m}_i^T\mathbf{m}_j}{\|\mathbf{m}_i\|_2\|\mathbf{m}_j\|_2}$$ 
implies $\mathbf{c}_i\perp\mathbf{c}_j$ for $i\neq j$.
\end{proof}

From Theorem \ref{thm5}, it can be seen that the modularity components are orthogonal to each other. Next we prove that the projection of the uncentered data onto the span of $\mathbf{c}_i$ is a scalar multiple of $\mathbf{b}_i$. 

\begin{theorem}\label{thm6}
With the assumptions in Lemma \ref{thm3}, let $\mathbf{P}_{\mathbf{c}_i}$ be the projector onto the span of $\mathbf{c}_i$. Then we have 
\begin{equation}\mathbf{P}_{\mathbf{c}_i}\mathbf{X}=\frac{1}{\|\mathbf{m}_i\|_2}\mathbf{c}_i\mathbf{b}_i^T.
\end{equation} 
\end{theorem}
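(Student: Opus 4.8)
The plan is to exploit the fact that $\mathbf{c}_i$ is a unit vector, so that the orthogonal projector onto its span is simply the rank-one matrix $\mathbf{P}_{\mathbf{c}_i} = \mathbf{c}_i\mathbf{c}_i^T$. Applying this to $\mathbf{X}$ gives $\mathbf{P}_{\mathbf{c}_i}\mathbf{X} = \mathbf{c}_i(\mathbf{c}_i^T\mathbf{X})$, so the entire computation reduces to identifying the row vector $\mathbf{c}_i^T\mathbf{X}$. Substituting $\mathbf{c}_i = \mathbf{m}_i/\|\mathbf{m}_i\|_2$ together with the definition $\mathbf{m}_i^T = \mathbf{b}_i^T\mathbf{X}^\dagger$, this becomes $\mathbf{c}_i^T\mathbf{X} = \frac{1}{\|\mathbf{m}_i\|_2}\mathbf{b}_i^T\mathbf{X}^\dagger\mathbf{X}$.

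The crux is therefore to evaluate $\mathbf{b}_i^T\mathbf{X}^\dagger\mathbf{X}$, and the key observation is that $\mathbf{X}^\dagger\mathbf{X}$ is the orthogonal projector onto the row space of $\mathbf{X}$, i.e. onto $\mathrm{span}\{\mathbf{v}_1,\ldots,\mathbf{v}_k\}$. By Lemma \ref{thm3} the eigenvector $\mathbf{b}_i = \sum_{j=1}^k \gamma_{ij}\mathbf{v}_j$ already lies in this span, so the projector fixes it: $\mathbf{X}^\dagger\mathbf{X}\mathbf{b}_i = \mathbf{b}_i$. Since $\mathbf{X}^\dagger\mathbf{X}$ is symmetric, transposing yields $\mathbf{b}_i^T\mathbf{X}^\dagger\mathbf{X} = \mathbf{b}_i^T$. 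Concretely, one can verify $\mathbf{X}^\dagger\mathbf{X}\mathbf{v}_j = \mathbf{v}_j$ for each $j \le k$ directly from $\mathbf{X} = \mathbf{U\Sigma V}^T$ and $\mathbf{X}^\dagger = \mathbf{V\Sigma}^\dagger\mathbf{U}^T$, which give $\mathbf{X}^\dagger\mathbf{X} = \mathbf{V\Sigma}^\dagger\mathbf{\Sigma V}^T = \sum_{j=1}^k \mathbf{v}_j\mathbf{v}_j^T$.

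Combining the two steps gives $\mathbf{c}_i^T\mathbf{X} = \frac{1}{\|\mathbf{m}_i\|_2}\mathbf{b}_i^T$, and hence $\mathbf{P}_{\mathbf{c}_i}\mathbf{X} = \mathbf{c}_i(\mathbf{c}_i^T\mathbf{X}) = \frac{1}{\|\mathbf{m}_i\|_2}\mathbf{c}_i\mathbf{b}_i^T$, which is the claimed identity. I expect the only genuinely non-routine point to be justifying that $\mathbf{X}^\dagger\mathbf{X}$ acts as the identity on $\mathbf{b}_i$; everything else is bookkeeping. The subtlety is purely that this requires $\mathbf{b}_i$ to lie in the row space of $\mathbf{X}$, which is not automatic for an arbitrary eigenvector of $\mathbf{B}$ but is guaranteed here because Lemma \ref{thm3} expresses $\mathbf{b}_i$ in the basis $\{\mathbf{v}_j\}$ associated with the nonzero singular values.
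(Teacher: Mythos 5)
Your proof is correct, and it takes a genuinely different (and cleaner) route than the paper's. The paper proves the identity by explicit coordinate bookkeeping in the SVD: it substitutes the Lemma \ref{thm4} expansion $\mathbf{m}_i^T=\sum_{j=1}^k(\gamma_{ij}/\sigma_j)\mathbf{u}_j^T$ into $\mathbf{c}_i^T\mathbf{X}=\frac{1}{\|\mathbf{m}_i\|_2}\mathbf{m}_i^T\mathbf{U\Sigma V}^T$, cancels each $\sigma_j$ against the corresponding entry of $\mathbf{\Sigma}$, and reassembles $\sum_{j=1}^k\gamma_{ij}\mathbf{v}_j^T=\mathbf{b}_i^T$ via Lemma \ref{thm3}. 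You bypass Lemma \ref{thm4} entirely: starting from the definition $\mathbf{m}_i^T=\mathbf{b}_i^T\mathbf{X}^\dagger$, you reduce the whole computation to the single identity $\mathbf{b}_i^T\mathbf{X}^\dagger\mathbf{X}=\mathbf{b}_i^T$, justified because $\mathbf{X}^\dagger\mathbf{X}=\mathbf{V\Sigma}^\dagger\mathbf{\Sigma V}^T=\sum_{j=1}^k\mathbf{v}_j\mathbf{v}_j^T$ is the symmetric orthogonal projector onto the row space of $\mathbf{X}$, and Lemma \ref{thm3} places $\mathbf{b}_i$ in that row space. The two arguments rest on the same underlying fact, but yours isolates the one hypothesis that actually matters --- row-space membership of $\mathbf{b}_i$, which would fail for an arbitrary eigenvector of a general symmetric matrix --- whereas the paper verifies the same thing implicitly through the coordinate cancellation. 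Your version is shorter and more conceptual; the paper's buys an explicit display of the coefficients $\gamma_{ij}$ along the way (and, incidentally, its final displayed line contains a typo, $\mathbf{v}_i^T$ where $\mathbf{v}_j^T$ is meant, which your approach avoids by never descending to coordinates).
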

\begin{proof}
$$\mathbf{P}_{\mathbf{c}_i}\mathbf{X}=\mathbf{c}_i\mathbf{c}_i^T\mathbf{X}=\frac{1}{\|\mathbf{m}_i\|_2}\mathbf{c}_i\mathbf{m}_i^T\mathbf{U\Sigma V}^T$$
$$=\frac{1}{\|\mathbf{m}_i\|_2}\mathbf{c}_i\bigg(\sum_{j=1}^k\frac{\gamma_{ij}}{\sigma_j}\mathbf{u}_j^T\bigg)\mathbf{U\Sigma V}^T$$
$$=\frac{1}{\|\mathbf{m}_i\|_2}\mathbf{c}_i\begin{pmatrix}
\frac{\gamma_{i1}}{\sigma_1} & \frac{\gamma_{i2}}{\sigma_2} & \cdots & \frac{\gamma_{ik}}{\sigma_k} & 0 & \cdots & 0
\end{pmatrix}_{1\times p}\Sigma\mathbf{V}^T$$
$$=\frac{1}{\|\mathbf{m}_i\|_2}\mathbf{c}_i\begin{pmatrix}
\gamma_{i1} & \gamma_{i2} & \cdots &\gamma_{ik} & 0 & \cdots & 0
\end{pmatrix}_{1\times n}\mathbf{V}^T$$
$$=\frac{1}{\|\mathbf{m}_i\|_2}\mathbf{c}_i\sum_{j=1}^k\gamma_{ij}\mathbf{v}_i^T=\frac{1}{\|\mathbf{m}_i\|_2}\mathbf{c}_i\mathbf{b}_i^T.$$
\end{proof}

This property is similar to that of principal components in the sense that if we project the data onto the span of the components, we get a scalar multiple of a vector, and the vector can give the clusters in the data based on the signs of the entries in the eigenvectors. Finally, we can prove that if we look at $\mathbf{X}$ in the space perpendicular to $\mathbf{c}_1$, $\mathbf{c}_2$, $\cdots$, $\mathbf{c}_{i-1}$, then the projection onto the span of $\mathbf{c}_i$ will give us the largest modularity, and the projection is just $\mathbf{b}_i$.
\begin{theorem}\label{thm7}
With the assumptions in Lemma \ref{thm3}, 
\begin{equation}
\beta_i=\frac{1}{\|\mathbf{m}_i\|_2^2},\text{ }1\le i \le k-1.
\end{equation} 
Moreover, let $\mathbf{X}_1=\mathbf{X}$ and for $1<i\le k-1$, let
\begin{equation}\mathbf{X}_i=\mathbf{X}-\sum_{j=1}^{i-1}\mathbf{c}_j\mathbf{c}_j^T\mathbf{X},
\end{equation}
and let $\mathbf{d}_i$, $m_i$ be defined correspondingly. Under these conditions, $\beta_i$ is the largest eigenvalue of $\mathbf{B}_i=\mathbf{X}_i^T\mathbf{X}_i-\mathbf{d}_i\mathbf{d}_i^T/(2m_i)$, and $\mathbf{b}_i$ is the corresponding eigenvector of $\beta_i$.
\end{theorem}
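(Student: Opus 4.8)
The plan is to handle the two assertions in turn: first the scalar identity $\beta_i=1/\|\mathbf{m}_i\|_2^2$, which is essentially a by-product of Theorem \ref{thm5}, and then the deflation claim, which I would reduce to an ordinary spectral deflation of $\mathbf{B}$. For the scalar identity, I would simply specialize the computation already performed in the proof of Theorem \ref{thm5}. That proof shows $\mathbf{m}_i^T\mathbf{m}_j=\frac{1}{\beta_i\beta_j}\mathbf{b}_i^T(\mathbf{BX}^\dagger)(\mathbf{BX}^\dagger)^T\mathbf{b}_j=\frac{1}{\beta_i}\mathbf{b}_i^T\mathbf{b}_j$. Setting $j=i$ and using that $\mathbf{b}_i$ is a unit eigenvector gives $\|\mathbf{m}_i\|_2^2=\frac{1}{\beta_i}\mathbf{b}_i^T\mathbf{b}_i=\frac{1}{\beta_i}$, which is exactly the claim. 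This step is routine.

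For the deflation statement, the first goal is to express $\mathbf{A}_i:=\mathbf{X}_i^T\mathbf{X}_i$ in terms of $\mathbf{B}$. Since $\mathbf{c}_1,\dots,\mathbf{c}_{i-1}$ are orthonormal (Theorem \ref{thm5}), the map $\mathbf{I}-\sum_{j<i}\mathbf{c}_j\mathbf{c}_j^T$ is an orthogonal projector, so $\mathbf{X}_i=(\mathbf{I}-\sum_{j<i}\mathbf{c}_j\mathbf{c}_j^T)\mathbf{X}$ and idempotency yields $\mathbf{A}_i=\mathbf{A}-\sum_{j<i}\mathbf{X}^T\mathbf{c}_j\mathbf{c}_j^T\mathbf{X}$. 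Reading $\mathbf{c}_j^T\mathbf{X}=\frac{1}{\|\mathbf{m}_j\|_2}\mathbf{b}_j^T$ off of Theorem \ref{thm6} and invoking the scalar identity just proved, each summand collapses to $\frac{1}{\|\mathbf{m}_j\|_2^2}\mathbf{b}_j\mathbf{b}_j^T=\beta_j\mathbf{b}_j\mathbf{b}_j^T$, so $\mathbf{A}_i=\mathbf{A}-\sum_{j<i}\beta_j\mathbf{b}_j\mathbf{b}_j^T$.

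The crux, and the step I expect to be the main obstacle, is showing that the degree vector and edge count are unaffected by the deflation, i.e. $\mathbf{d}_i=\mathbf{d}$ and $m_i=m$; without this the extra term $\mathbf{d}_i\mathbf{d}_i^T/(2m_i)$ would spoil the clean deflation. The key is the standard modularity identity $\mathbf{Be}=\mathbf{Ae}-\frac{\mathbf{d}(\mathbf{d}^T\mathbf{e})}{2m}=\mathbf{d}-\mathbf{d}=\mathbf{0}$, so $\mathbf{e}$ is a null vector of $\mathbf{B}$ and is therefore orthogonal to every eigenvector $\mathbf{b}_j$ belonging to a nonzero eigenvalue. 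Hence $\mathbf{d}_i=\mathbf{A}_i\mathbf{e}=\mathbf{d}-\sum_{j<i}\beta_j(\mathbf{b}_j^T\mathbf{e})\mathbf{b}_j=\mathbf{d}$ and $2m_i=\mathbf{e}^T\mathbf{A}_i\mathbf{e}=2m$. Substituting into the definition of $\mathbf{B}_i$ gives the exact spectral deflation $\mathbf{B}_i=\mathbf{A}_i-\frac{\mathbf{d}_i\mathbf{d}_i^T}{2m_i}=\mathbf{B}-\sum_{j<i}\beta_j\mathbf{b}_j\mathbf{b}_j^T$.

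From here the conclusion is immediate. Because $\{\mathbf{b}_\ell\}$ is an orthonormal eigensystem of the symmetric matrix $\mathbf{B}$, the rank-$(i-1)$ subtraction leaves each $\mathbf{b}_\ell$ an eigenvector of $\mathbf{B}_i$, with eigenvalue $0$ for $\ell<i$ and eigenvalue $\beta_\ell$ for $\ell\ge i$, while every other eigenvector of $\mathbf{B}$ is untouched; in particular $\mathbf{B}_i\mathbf{b}_i=\beta_i\mathbf{b}_i$. Finally, by the interlacing theorem cited in the introduction together with $\mathbf{Be}=\mathbf{0}$, the only positive eigenvalues of $\mathbf{B}$ are $\beta_1>\cdots>\beta_{k-1}$, the remainder being nonpositive. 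Since the deflation merely lowers $\beta_1,\dots,\beta_{i-1}$ to $0$ and leaves $\beta_i>0$ as the largest surviving value, $\beta_i$ is the largest eigenvalue of $\mathbf{B}_i$, attained at $\mathbf{b}_i$.
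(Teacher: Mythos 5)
Your proof is correct and takes essentially the same route as the paper's: you obtain $\mathbf{X}_i^T\mathbf{X}_i=\mathbf{X}^T\mathbf{X}-\sum_{j<i}\beta_j\mathbf{b}_j\mathbf{b}_j^T$ via Theorem \ref{thm6} and idempotency, use $\mathbf{b}_j^T\mathbf{e}=0$ to get the clean deflation $\mathbf{B}_i=\mathbf{B}-\sum_{j<i}\beta_j\mathbf{b}_j\mathbf{b}_j^T$, and conclude by spectral deflation (the paper cites Brauer's theorem for this last step, and spells out $\mathbf{d}_i=\mathbf{d}$, $m_i=m$ only implicitly while plugging in, but the content is identical). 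The one genuine divergence is your derivation of $\beta_i=1/\|\mathbf{m}_i\|_2^2$: setting $j=i$ in the inner-product computation from Theorem \ref{thm5} handles all $i$ uniformly and is arguably cleaner than the paper's variational argument, which establishes the identity for $i=1$ via Newman's maximization result and the factorization $\mathbf{B}=(\mathbf{BX}^\dagger)(\mathbf{BX}^\dagger)^T$, then extends inductively ``by similar steps.''
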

\begin{proof}
For $i=2$, since it is proved in \cite{newman2006modularity} that $\mathbf{b}_1$ is the vector $\bar{\mathbf{s}}$ that maximizes $Q$ in Equation \ref{eq2}, we have
$$Q_{max1}=\mathbf{b}_1^T\mathbf{B}\mathbf{b}_1=\beta_1\mathbf{b}_1^T\mathbf{b}_1=\beta_1.$$
By Theorem \ref{thm5}, 
$$\max_{\|\mathbf{s}\|_2=1}\mathbf{s}^T\mathbf{B}\mathbf{s}=\max_{\|\mathbf{s}\|_2=1}\mathbf{s}^T(\mathbf{BX}^\dagger)(\mathbf{BX}^\dagger)^T\mathbf{s}$$
$$=\max_{\|\mathbf{s}\|_2=1}\|(\mathbf{BX}^\dagger)^T\mathbf{s}\|_2^2=\max_{\|\mathbf{s}\|_2=1}\|(\mathbf{X}^{\dagger})^T\mathbf{B}\mathbf{s}\|_2^2$$
$$=\|(\mathbf{X}^{\dagger})^T\mathbf{B}\mathbf{b}_1\|_2^2=\|(\mathbf{X}^{\dagger})^T\beta_1\mathbf{b}_1\|_2^2=\|\beta_1\mathbf{m}_1\|_2^2=\beta_1.$$
Therefore $\beta_i=1/\|\mathbf{m}_i\|_2^2$. Then $\mathbf{X}_2$ is defined by
$$\mathbf{X}_2=\mathbf{X}-\mathbf{c}_1\mathbf{c}_1^T\mathbf{X}=(\mathbf{I}-\mathbf{c}_1\mathbf{c}_1^T)\mathbf{X}.$$
Since $\mathbf{I}-\mathbf{c}_1\mathbf{c}_1^T$ is idempotent, we have
$$\mathbf{X}_2^T\mathbf{X}_2=\mathbf{X}^T(\mathbf{I}-\mathbf{c}_1\mathbf{c}_1^T)\mathbf{X}=\mathbf{X}^T\mathbf{X}-\mathbf{X}^T\mathbf{c}_1\mathbf{c}_1^T\mathbf{X}.$$
By Theorem \ref{thm6}, we know that 
$\mathbf{c}_1\mathbf{c}_1^T\mathbf{X}=\mathbf{c}_1\mathbf{b}_1^T/\|\mathbf{m}_1\|_2$, so 
$\mathbf{c}_1^T\mathbf{X}=\sqrt{\beta_1}\mathbf{b}_1^T$ and then
$$\mathbf{X}_2^T\mathbf{X}_2=\mathbf{X}^T\mathbf{X}-\beta_1\mathbf{b}_1\mathbf{b}_1^T.$$
Plug $\mathbf{X}_2^T\mathbf{X}_2$ into 
$$\mathbf{B}_2=\mathbf{X}_2^T\mathbf{X}_2-\frac{\mathbf{d}_2\mathbf{d}_2^T}{2m_2}
=\mathbf{X}_2^T\mathbf{X}_2-\frac{\mathbf{X}_2^T\mathbf{X}_2\mathbf{ee}^T\mathbf{X}_2^T\mathbf{X}_2}{\mathbf{e}^T\mathbf{X}_2^T\mathbf{X}_2\mathbf{e}},$$
and notice that $\mathbf{b}_1^T\mathbf{e}=0$ (because $\mathbf{b}_1$ and $\mathbf{e}$ are eigenvectors corresponding to different eigenvalues of $\mathbf{B}$) to produce
$$\mathbf{B}_2=\mathbf{B}-\beta_1\mathbf{b}_1\mathbf{b}_1^T.$$ 
So by Brauer's theorem \cite{meyer2000matrix}(Exercise 7.1.17), the eigenpairs of $\mathbf{B}_2$ are the ones of $\mathbf{B}_1$ with $(\beta_1, \mathbf{b}_1)$ replaced by an eigenpair with zero eigenvalue. So $\beta_2$ is the largest eigenvalue of $\mathbf{B}_2$ and $\mathbf{b}_2$ is the eigenvector of $\mathbf{B}_2$ corresponding to $\beta_2$.\\
For the cases when $2<i\le k-1$, let 
$$Q_{i-1}=\mathbf{s}^T\mathbf{B}_{i-1}\mathbf{s}.$$
Notice that $\mathbf{b}_{i-1}$ is the vector $\mathbf{s}$ that maximizes $Q_{i-1}$. Then by similar steps we can prove that $\|\mathbf{m}_{i-1}\|_2^2=1/\beta_{i-1}$. Then $\mathbf{X}_i$ can be defined by
$$\mathbf{X}_i=\mathbf{X}-\sum_{j=1}^{i-1}\mathbf{c}_j\mathbf{c}_j^T\mathbf{X}=(\mathbf{I}-\sum_{j=1}^{i-1}\mathbf{c}_j\mathbf{c}_j^T)\mathbf{X}.$$
It is easy to see that $\mathbf{I}-\sum_{j=1}^{i-1}\mathbf{c}_j\mathbf{c}_j^T$ is idempotent. Then we have 
$$\mathbf{X}_i^T\mathbf{X}_i=\mathbf{X}^T(\mathbf{I}-\sum_{j=1}^{i-1}\mathbf{c}_j\mathbf{c}_j^T)\mathbf{X}$$
$$=\mathbf{X}^T\mathbf{X}-\mathbf{X}^T(\sum_{j=1}^{i-1}\mathbf{c}_j\mathbf{c}_j^T)\mathbf{X}=\mathbf{X}^T\mathbf{X}-\sum_{j=1}^{i-1}\beta_j\mathbf{b}_j\mathbf{b}_j^T.$$
Plug $\mathbf{X}_i^T\mathbf{X}_i$ into
$$\mathbf{B}_i=\mathbf{X}_i^T\mathbf{X}_i-\frac{\mathbf{d}_i\mathbf{d}_i^T}{2m_i}
=\mathbf{X}_i^T\mathbf{X}_i-\frac{\mathbf{X}_i^T\mathbf{X}_i\mathbf{ee}^T\mathbf{X}_i^T\mathbf{X}_i}{\mathbf{e}^T\mathbf{X}_i^T\mathbf{X}_i\mathbf{e}},$$
and notice that $\mathbf{b}_j^T\mathbf{e}=0$ (because $\mathbf{b}_j$ and $\mathbf{e}$ are eigenvectors corresponding to different eigenvalues of $\mathbf{B}$) to produce
$$\mathbf{B}_i=\mathbf{B}-\sum_{j=1}^{i-1}\beta_j\mathbf{b}_j\mathbf{b}_j^T=\mathbf{B}_{i-1}-\beta_{i-1}\mathbf{b}_{i-1}\mathbf{b}_{i-1}^T.$$
So by Brauer's theorem again, the eigenpairs of $\mathbf{B}_i$ are the ones of $\mathbf{B}_{i-1}$ with $(\beta_{i-1}, \mathbf{b}_{i-1})$ replaced by an eigenpair with zero eigenvalue. So $\beta_i$ is the largest eigenvalue of $\mathbf{B}_i$ and $\mathbf{b}_i$ is the eigenvector of $\mathbf{B}_i$ corresponding to $\beta_i$.
\end{proof}

Theorem \ref{thm7} says when we build the new data matrix $\mathbf{X}_i$ from $\mathbf{X}$, $\mathbf{d}_i$ and $m_i$ change. Also $\mathbf{B}_i$ is different from $\mathbf{B}$, but the eigenpairs of $\mathbf{B}$ are retained by $\mathbf{B}_i$ except for the first $i-1$ pairs. The conclusion is that the first modularity component has the largest modularity of the data $\mathbf{X}$. Each succeeding modularity component has the largest modularity with the constraint that it is orthogonal to all previous modularity components.

\section{Conclusion}
In this paper, the concept of modularity components is defined, and some important properties of modularity components are proven. The concept of modularity components can be used to explain why using more than one eigenvectors of the modularity matrix to do data clustering is reasonable. The combination of modularity clustering and modularity components gives a modularity component analysis that has some nice properties similar to the well known principal component analysis.

\bibliographystyle{siam}
\nocite{*}
\bibliography{mybib1}

\newpage

\appendix
\section*{Appendices}
\addcontentsline{toc}{section}{Appendices}
\renewcommand{\thesubsection}{\Alph{subsection}}

\counterwithin{theorem}{section}

\section{Proof of Lemma \ref{thm3}} \label{app1}
The lemma is based on a theorem from \cite{bunch1978rank} about the interlacing property of a diagonal matrix and its rank-one modification and how to calculate the eigenvectors of a diagonal plus rank one (DPR1) matrix \cite{meyer2000matrix}. The theorem can also be found in \cite{wilkinson1965algebraic}.

\begin{theorem}\label{thm1}
Let $\mathbf{C}=\mathbf{D}+\rho\mathbf{vv}^T$, where $\mathbf{D}$ is diagonal, $\|\mathbf{v}\|_2=1$. Let $d_1\le d_2\le \cdots\le d_n$ be the eigenvalues of $\mathbf{D}$, and let $\tilde{d}_1\le \tilde{d}_2\le \cdots\le \tilde{d}_n$ be the eigenvalues of $\mathbf{C}$. Then $\tilde{d}_1\le d_1\le \tilde{d}_2\le d_2\le\cdots \le\tilde{d}_n\le d_n$ if $\rho<0$. If the $d_i$ are distinct and all the elements of $\mathbf{v}$ are nonzero, then the eigenvalues of $\mathbf{C}$ strictly separate those of $\mathbf{D}$.
\end{theorem}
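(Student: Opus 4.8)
The plan is to route everything through the \emph{secular equation} attached to the rank-one update, since this single device delivers both the strict separation under the nondegeneracy hypotheses and, after a limiting argument, the general weak interlacing. First I would take an eigenpair $(\lambda,\mathbf{x})$ of $\mathbf{C}=\mathbf{D}+\rho\mathbf{vv}^T$ with $\lambda\notin\{d_1,\dots,d_n\}$ and rewrite the eigenvalue equation as $(\mathbf{D}-\lambda\mathbf{I})\mathbf{x}=-\rho(\mathbf{v}^T\mathbf{x})\mathbf{v}$. Because $\mathbf{D}-\lambda\mathbf{I}$ is then invertible, this gives $\mathbf{x}=-\rho(\mathbf{v}^T\mathbf{x})(\mathbf{D}-\lambda\mathbf{I})^{-1}\mathbf{v}$; left-multiplying by $\mathbf{v}^T$ and cancelling the necessarily nonzero scalar $\mathbf{v}^T\mathbf{x}$ produces $f(\lambda)=0$, where $f(\lambda)=1+\rho\sum_{i=1}^{n}v_i^2/(d_i-\lambda)$. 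Conversely each root of $f$ supplies an eigenvalue, so the eigenvalues of $\mathbf{C}$ outside $\{d_i\}$ are exactly the zeros of $f$.

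Next, under the stated hypotheses (the $d_i$ distinct and every $v_i\neq0$), I would analyze $f$ by hand. Its derivative $f'(\lambda)=\rho\sum_{i=1}^{n}v_i^2/(d_i-\lambda)^2$ has the sign of $\rho$, so for $\rho<0$ the function $f$ is strictly decreasing on each interval cut out by the simple poles $d_1<\cdots<d_n$. Reading off the one-sided limits — $f(\lambda)\to-\infty$ as $\lambda\to d_i^{-}$, $f(\lambda)\to+\infty$ as $\lambda\to d_i^{+}$, and $f(\lambda)\to 1$ as $\lambda\to\pm\infty$ — the intermediate value theorem places exactly one root in $(-\infty,d_1)$, one in each $(d_i,d_{i+1})$ for $1\le i\le n-1$, and none in $(d_n,\infty)$. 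These $n$ roots exhaust the spectrum of $\mathbf{C}$ and sit in the pattern $\tilde d_1<d_1<\tilde d_2<d_2<\cdots<\tilde d_n<d_n$, which is precisely the strict separation claimed in the second assertion.

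Finally, to recover the non-strict interlacing for arbitrary $\mathbf{D}$ and $\mathbf{v}$ with $\rho<0$, I would argue by continuity. Given any admissible $\mathbf{D},\mathbf{v}$, I perturb the diagonal entries to be distinct and the entries of $\mathbf{v}$ to be nonzero, renormalizing so that $\|\mathbf{v}_\varepsilon\|_2=1$, obtaining $\mathbf{C}_\varepsilon=\mathbf{D}_\varepsilon+\rho\mathbf{v}_\varepsilon\mathbf{v}_\varepsilon^T$ that satisfies the nondegeneracy hypotheses for every $\varepsilon>0$ and converges entrywise to $\mathbf{C}$ as $\varepsilon\to0$. The strict case already gives $\tilde d_i(\varepsilon)<d_i(\varepsilon)$ for all $i$ and $d_{i-1}(\varepsilon)<\tilde d_i(\varepsilon)$ for $i\ge2$; since the increasingly-ordered eigenvalues of a symmetric matrix depend continuously on its entries, passing to the limit $\varepsilon\to0$ collapses these strict inequalities to the weak chain $\tilde d_1\le d_1\le\tilde d_2\le\cdots\le\tilde d_n\le d_n$.

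I expect the main obstacle to be the degenerate bookkeeping that the continuity step is designed to finesse: when some $v_i=0$ or some $d_i$ is repeated, $\mathbf{C}$ acquires eigenvalues equal to those $d_i$ by deflation — the coordinate vector $\mathbf{e}_i$, or a vector spanning a repeated eigenspace and orthogonal to $\mathbf{v}$, is an eigenvector at eigenvalue $d_i$ — and a direct argument would have to prove that these deflated eigenvalues together with the roots of the reduced secular equation reassemble into exactly the claimed positions. The limiting argument avoids that counting, but in exchange I must verify that the perturbation can be taken to respect $\|\mathbf{v}_\varepsilon\|_2=1$ and to preserve the ordering of the limiting eigenvalues, so that each weak inequality lands in its correct slot; this consistency check is the delicate point of the proof.
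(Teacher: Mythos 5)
Your proposal is correct, but note that the paper itself offers no proof of this theorem: it is quoted as a known result, with citations to Bunch--Nielsen--Sorensen and to Wilkinson, so there is no in-paper argument to compare against. What you have written is essentially the classical proof from those cited sources: the secular equation $f(\lambda)=1+\rho\sum_{i=1}^n v_i^2/(d_i-\lambda)$ is exactly the device used there, and your computation also yields, as a byproduct, the eigenvector formula $(\mathbf{D}-\tilde d_i\mathbf{I})^{-1}\mathbf{v}$ that the paper states separately as Corollary A.2. Your nondegenerate case is complete: the monotonicity of $f$ on each interval between consecutive poles, the one-sided limits, and the limits at $\pm\infty$ place exactly $n$ distinct roots, and since each root of $f$ is an eigenvalue (with eigenvector $(\mathbf{D}-\lambda\mathbf{I})^{-1}\mathbf{v}$, using $\rho\,\mathbf{v}^T(\mathbf{D}-\lambda\mathbf{I})^{-1}\mathbf{v}=-1$), the $n$ distinct roots necessarily exhaust the spectrum of the $n\times n$ matrix $\mathbf{C}$ -- so the deflation bookkeeping you worry about is genuinely avoided in this case. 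The limiting step is also sound and the ``delicate point'' you flag is easily discharged: perturb the diagonal as $d_i\mapsto d_i+i\varepsilon$ so the increasing order is preserved by construction, perturb $\mathbf{v}$ by a generic small vector and renormalize (which changes nothing else, since $\rho$ is held fixed and $\mathbf{v}_\varepsilon\to\mathbf{v}$), and invoke the continuity of the increasingly ordered eigenvalues of a symmetric matrix in its entries (e.g., via Weyl's inequality $|\lambda_i(\mathbf{A})-\lambda_i(\mathbf{A}')|\le\|\mathbf{A}-\mathbf{A}'\|_2$); strict inequalities then pass to the stated weak chain. In short: a correct, self-contained proof that supplies what the paper delegates to its references, by the same route those references take.
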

\begin{corollary}\label{thm2}
With the notations in Theorem \ref{thm1}, the eigenvector of $\mathbf{C}$ corresponding to the eigenvalue $\tilde{d}_i$ is given by $(\mathbf{D}-\tilde{d}_i\mathbf{I})^{-1}\mathbf{v}$.
\end{corollary}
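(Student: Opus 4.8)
The plan is to verify directly that the candidate vector $\mathbf{x} := (\mathbf{D} - \tilde{d}_i\mathbf{I})^{-1}\mathbf{v}$ is an eigenvector of $\mathbf{C}$ for the eigenvalue $\tilde{d}_i$, i.e. that $(\mathbf{C} - \tilde{d}_i\mathbf{I})\mathbf{x} = \mathbf{0}$. First I would check that $\mathbf{x}$ is well-defined and nonzero. By Theorem \ref{thm1}, under the hypotheses that the $d_j$ are distinct and all entries of $\mathbf{v}$ are nonzero, the eigenvalues of $\mathbf{C}$ strictly separate those of $\mathbf{D}$, so $\tilde{d}_i \neq d_j$ for every $j$. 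Hence $\mathbf{D} - \tilde{d}_i\mathbf{I}$ is invertible, and since $\mathbf{v}$ has all nonzero entries, $\mathbf{x} = (\mathbf{D} - \tilde{d}_i\mathbf{I})^{-1}\mathbf{v} \neq \mathbf{0}$.

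Next I would apply $\mathbf{C} - \tilde{d}_i\mathbf{I} = (\mathbf{D} - \tilde{d}_i\mathbf{I}) + \rho\mathbf{v}\mathbf{v}^T$ to $\mathbf{x}$. The diagonal part telescopes, $(\mathbf{D} - \tilde{d}_i\mathbf{I})(\mathbf{D} - \tilde{d}_i\mathbf{I})^{-1}\mathbf{v} = \mathbf{v}$, while the rank-one part contributes $\rho\mathbf{v}\big(\mathbf{v}^T(\mathbf{D} - \tilde{d}_i\mathbf{I})^{-1}\mathbf{v}\big)$. Collecting the two terms gives $(\mathbf{C} - \tilde{d}_i\mathbf{I})\mathbf{x} = \big(1 + \rho\,\mathbf{v}^T(\mathbf{D} - \tilde{d}_i\mathbf{I})^{-1}\mathbf{v}\big)\mathbf{v}$. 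Since $\mathbf{v} \neq \mathbf{0}$, the whole problem reduces to showing that the scalar factor $1 + \rho\,\mathbf{v}^T(\mathbf{D} - \tilde{d}_i\mathbf{I})^{-1}\mathbf{v}$ vanishes.

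The main step, and the only genuine obstacle, is establishing this secular equation. I would read it off the characteristic polynomial of $\mathbf{C}$: for any $\lambda$ that is not an eigenvalue of $\mathbf{D}$, factor $\det(\mathbf{C} - \lambda\mathbf{I}) = \det(\mathbf{D} - \lambda\mathbf{I})\,\det\big(\mathbf{I} + \rho(\mathbf{D} - \lambda\mathbf{I})^{-1}\mathbf{v}\mathbf{v}^T\big)$, and then apply the rank-one determinant identity $\det(\mathbf{I} + \mathbf{a}\mathbf{b}^T) = 1 + \mathbf{b}^T\mathbf{a}$ with $\mathbf{a} = \rho(\mathbf{D} - \lambda\mathbf{I})^{-1}\mathbf{v}$ and $\mathbf{b} = \mathbf{v}$ to rewrite the second factor as $1 + \rho\,\mathbf{v}^T(\mathbf{D} - \lambda\mathbf{I})^{-1}\mathbf{v}$. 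Setting $\lambda = \tilde{d}_i$, the left-hand side is zero because $\tilde{d}_i$ is an eigenvalue of $\mathbf{C}$, whereas $\det(\mathbf{D} - \tilde{d}_i\mathbf{I}) \neq 0$ by the first step; therefore the remaining factor must be zero, which is precisely the secular equation needed.

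Combining the pieces yields $(\mathbf{C} - \tilde{d}_i\mathbf{I})\mathbf{x} = \mathbf{0}$ with $\mathbf{x} \neq \mathbf{0}$, so $\mathbf{x} = (\mathbf{D} - \tilde{d}_i\mathbf{I})^{-1}\mathbf{v}$ is an eigenvector of $\mathbf{C}$ associated with $\tilde{d}_i$. Finally, because strict interlacing forces the $\tilde{d}_i$ to be pairwise distinct, each eigenvalue is simple, so this nonzero vector spans the entire one-dimensional eigenspace and the formula describes the eigenvector up to scalar, completing the argument.
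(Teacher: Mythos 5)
Your proof is correct and complete. Note, however, that the paper itself offers no proof of this corollary at all: it is quoted as a known fact about diagonal-plus-rank-one (DPR1) matrices, with pointers to Bunch--Nielsen--Sorensen, Wilkinson, and an exercise in Meyer's book. So there is nothing in the paper to compare against step by step; what you have supplied is the standard secular-equation argument from the literature, and you have supplied it correctly. Your handling of the two points that are usually glossed over is the real value here: first, you observe that the formula $(\mathbf{D}-\tilde{d}_i\mathbf{I})^{-1}\mathbf{v}$ is only meaningful when $\tilde{d}_i$ is not an eigenvalue of $\mathbf{D}$, which you get from the strict-separation clause of Theorem \ref{thm1} (distinct $d_j$, all entries of $\mathbf{v}$ nonzero) --- hypotheses the corollary's statement leaves implicit, and which do fail in general (when $\mathbf{v}$ has zero entries, $\mathbf{C}$ has eigenvectors of an entirely different form); second, you close the loop by deriving the vanishing of the scalar $1+\rho\,\mathbf{v}^T(\mathbf{D}-\tilde{d}_i\mathbf{I})^{-1}\mathbf{v}$ from the factorization $\det(\mathbf{C}-\lambda\mathbf{I})=\det(\mathbf{D}-\lambda\mathbf{I})\bigl(1+\rho\,\mathbf{v}^T(\mathbf{D}-\lambda\mathbf{I})^{-1}\mathbf{v}\bigr)$ rather than asserting it. A marginally shorter variant would avoid determinants altogether: if $\mathbf{C}\mathbf{b}=\tilde{d}_i\mathbf{b}$, then $(\mathbf{D}-\tilde{d}_i\mathbf{I})\mathbf{b}=-\rho(\mathbf{v}^T\mathbf{b})\mathbf{v}$, and $\mathbf{v}^T\mathbf{b}\neq 0$ (otherwise $\mathbf{b}$ would be an eigenvector of $\mathbf{D}$ for $\tilde{d}_i$, contradicting $\tilde{d}_i\neq d_j$), so $\mathbf{b}$ is a nonzero multiple of $(\mathbf{D}-\tilde{d}_i\mathbf{I})^{-1}\mathbf{v}$; this runs the implication in the direction ``eigenvector $\Rightarrow$ formula'' and gets the secular equation for free. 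Either way, your closing remark that strict interlacing makes each $\tilde{d}_i$ simple, so the formula pins down the eigenvector up to scale, is exactly the right finishing touch, and it matters downstream: Lemma \ref{thm3} uses this formula as if it gave \emph{the} eigenvector of $\mathbf{B}$.
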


Theorem \ref{thm1} tells us the eigenvalues of a DPR1 matrix are interlaced with the eigenvalues of the original diagonal matrix. Next we will write the eigenvector corresponding to the positive eigenvalues of a modularity matrix as a linear combination of the eigenvectors of the corresponding adjacency matrix.\\
\newline
With the notations in Section 1, since $\mathbf{A}=\mathbf{X}^T\mathbf{X}$, then if the SVD of $\mathbf{X}$ is $\mathbf{X}=\mathbf{U\Sigma V}^T$, then 
$$\mathbf{A}=\mathbf{V}\mathbf{\Sigma^T\Sigma}\mathbf{V}^T=\mathbf{V}\mathbf{\Sigma_\mathbf{A}}\mathbf{V}^T,$$
where $\Sigma_\mathbf{A}$ is an $n\times n$ diagonal matrix. Suppose the rows and columns of $\mathbf{A}$ are ordered such that
$\mathbf{\Sigma_\mathbf{A}}=diag(\alpha_1, \alpha_2, \cdots, \alpha_n)$, where $\alpha_1>\alpha_2>\cdots>\alpha_k>\alpha_{k+1}=\cdots=\alpha_n=0$. Let $\mathbf{V}=\begin{pmatrix}\mathbf{v}_1&\mathbf{v}_2&\cdots&\mathbf{v}_n\end{pmatrix}$. Similarly, since $\mathbf{B}$ is symmetric, it is orthogonally similar to a diagonal matrix. Suppose the eigenvalues of $\mathbf{B}$ are $\beta_1,\beta_2,\cdots,\beta_n$ with largest $k-1$ eigenvalues $\beta_1>\beta_2>\cdots>\beta_{k-1}$.
\begin{proof}
Since $\mathbf{B}=\mathbf{A}-\mathbf{d}\mathbf{d}^T/(2m)$, we have
$$\mathbf{B}=\mathbf{A}-\frac{\mathbf{d}\mathbf{d}^T}{2m}=\mathbf{V}\mathbf{\Sigma_\mathbf{A}}\mathbf{V}^T-\frac{\mathbf{d}\mathbf{d}^T}{2m}=\mathbf{V}(\mathbf{\Sigma_\mathbf{A}}+\rho\mathbf{y}\mathbf{y}^T)\mathbf{V}^T,$$
where $\mathbf{y}=\mathbf{V}^T\mathbf{d}/\|\mathbf{V}^T\mathbf{d}\|_2$ and $\rho=-\|\mathbf{V}^T\mathbf{d}\|_2^2/(2m)$. Since $\mathbf{\Sigma_\mathbf{A}}+\rho\mathbf{y}\mathbf{y}^T$ is also symmetric, it is orthogonally similar to a diagonal matrix. So we have 
$$\mathbf{B}=\mathbf{V}\mathbf{U}^\prime\mathbf{\Sigma_\mathbf{B}}\mathbf{U}^{\prime T}\mathbf{V}^T,$$
where $\mathbf{U}^\prime$ is orthogonal and $\mathbf{\Sigma_\mathbf{B}}$ is diagonal. Since $\mathbf{\Sigma_\mathbf{A}}+\rho\mathbf{y}\mathbf{y}^T$ is a DPR1 matrix, $\rho<0$ and $\|\mathbf{y}\|_2=1$, the interlacing theorem applies to the eigenvalues of $\mathbf{A}$ and $\mathbf{B}$. More specifically, we have
$$\alpha_k<\beta_{k-1}<\alpha_{k-1}<\beta_{k-2}<\cdots<\beta_2<\alpha_2<\beta_1<\alpha_1.$$
The strict inequalities hold because of our assumptions. Let $\mathbf{B}_1=\mathbf{\Sigma_\mathbf{A}}+\rho\mathbf{y}\mathbf{y}^T$. Since $\mathbf{B}=\mathbf{V}\mathbf{B}_1\mathbf{V}^T$, we have $\mathbf{B}\mathbf{V}=\mathbf{V}\mathbf{B}_1$. Suppose $(\lambda,\mathbf{u})$ is an eigenpair of $\mathbf{B}_1$, then
$$\mathbf{B}\mathbf{V}\mathbf{u}=\mathbf{V}\mathbf{B}_1\mathbf{u}=\lambda\mathbf{V}\mathbf{u}$$
implies that $(\lambda,\mathbf{u})$ is an eigenpair of $\mathbf{B}_1$ if and only if $(\lambda,\mathbf{Vu})$ is an eigenpair of $\mathbf{B}$. By Corollary \ref{thm2}, the eigenvector of $\mathbf{B}_1$ corresponding to $\beta_i$, $1\le i \le k-1$ is given by
$$\mathbf{p}_i=(\mathbf{\Sigma_{\mathbf{A}}}-\beta_i\mathbf{I})^{-1}\mathbf{y}=(\mathbf{\Sigma_{\mathbf{A}}}-\beta_i\mathbf{I})^{-1}\frac{\mathbf{V}^T\mathbf{d}}{\|\mathbf{V}^T\mathbf{d}\|_2},$$
and hence the eigenvector of $\mathbf{B}$ corresponding to $\beta_i$, $1\le i \le k-1$ is given by
$$\mathbf{b}_i=\mathbf{V}\mathbf{p}_i=\mathbf{V}(\mathbf{\Sigma_{\mathbf{A}}}-\beta_i\mathbf{I})^{-1}\frac{\mathbf{V}^T\mathbf{d}}{\|\mathbf{V}^T\mathbf{d}\|_2}$$
$$=\frac{1}{\|\mathbf{d}\|_2}\sum_{j=1}^n\frac{\mathbf{v}_j^T\mathbf{d}}{\alpha_j-\beta_i}\mathbf{v}_j.$$
Since $\mathbf{d}=\mathbf{Ae}=\mathbf{V\Sigma_\mathbf{A}}\mathbf{V}^T\mathbf{e}$ where $\mathbf{e}$ is a column vector with all ones, we have 
$$\mathbf{v}_j^T\mathbf{d}=\mathbf{v}_j^T\mathbf{V\Sigma_\mathbf{A}}\mathbf{V}^T\mathbf{e}
=\mathbf{e}_j^T\mathbf{\Sigma_\mathbf{A}}\mathbf{V}^T\mathbf{e}.$$
Since $\text{rank}(\mathbf{A})=k$, we have $\mathbf{v}_j^T\mathbf{d}=0$ for $j>k$. Therefore, the eigenvector of $\mathbf{B}$ corresponding to $\beta_i$, $1\le i \le k-1$ is given by
$$\mathbf{b}_i=\sum_{j=1}^k\gamma_{ij}\mathbf{v}_j,$$
where
$$\gamma_{ij}=\frac{\mathbf{v}_j^T\mathbf{d}}{(\alpha_j-\beta_i)\|\mathbf{d}\|_2}.$$
\end{proof}

\section{Examples satisfy the assumptions in Lemma \ref{thm3}} \label{app2}
We used two subsets of the popular MNIST data set from the literature, and the data set is described below.\\
\newline
The PenDigit data sets are subsets of the widely used MNIST database \cite{lecun1998gradient}\cite{zhang2002large}\cite{hertz2004boosting}\cite{chitta2012efficient}\cite{race2014determining}. The original data contains a training set of 60,000 handwritten digits from 44 writers. The first subset used in the experiments contains some of the digits 1, 5 and 7\footnote[1]{The data can be downloaded at http://www.kaggle.com/c/digit-recognizer/data}. The second subset used contains some of the digits 1, 7 and 9. Each piece of data is a row vector converted from a grey-scale image. Each image is 28 pixels in height and 28 pixels in width, so there are 784 pixels in total. Each row vector contains the label of the digit and the lightness of each pixel. Lightness of a pixel is represented by a number from 0 to 255 inclusively, and smaller numbers represent lighter pixels. \\
\newline
The $\mathbf{X}^T\mathbf{X}$ matrix of the 1-5-7 subset has 644 eigenvalues $\alpha_i$ that are positive, and the largest 643 eigenvalues $\beta_i$ of the $\mathbf{B}$ matrix are different from both $\alpha_i$ and $\alpha_{i+1}$. The $\mathbf{X}^T\mathbf{X}$ matrix of the 1-7-9 subset has 623 eigenvalues $\alpha_i$ that are positive, and the largest 622 eigenvalues $\beta_i$ of the $\mathbf{B}$ matrix are different from $\alpha_i$ and $\alpha_{i+1}$. Thus we conclude that these examples satisfy the assumptions in Lemma \ref{thm3}.

\end{document}